\newtheorem{thm}{Theorem}
\newtheorem{defn}{Definition}%
\begin{document}
	
	\title{Adaptive Error-Bounded Hierarchical Matrices for Efficient Neural Network Compression}
	\author{John Mango, Ronald Katende}
	\date{}

\maketitle
	
	\begin{abstract}This paper introduces a dynamic, error-bounded hierarchical matrix (H-matrix) compression method tailored for Physics-Informed Neural Networks (PINNs). The proposed approach reduces the computational complexity and memory demands of large-scale physics-based models while preserving the essential properties of the Neural Tangent Kernel (NTK). By adaptively refining hierarchical matrix approximations based on local error estimates, our method ensures efficient training and robust model performance. Empirical results demonstrate that this technique outperforms traditional compression methods, such as Singular Value Decomposition (SVD), pruning, and quantization, by maintaining high accuracy and improving generalization capabilities. Additionally, the dynamic H-matrix method enhances inference speed, making it suitable for real-time applications. This approach offers a scalable and efficient solution for deploying PINNs in complex scientific and engineering domains, bridging the gap between computational feasibility and real-world applicability.
	
	{\bf{Keywords:}} Physics-Informed Neural Networks, Hierarchical Matrices, Neural Tangent Kernel, Model Compression, Error-bounded Approximation
	
\end{abstract}

	\section{Introduction}
	Neural networks, particularly Physics-Informed Neural Networks (PINNs), have rapidly gained traction across various scientific and engineering disciplines due to their ability to integrate physical laws directly into the learning process \cite{hm1, hm2, hm3, hm4, hm5}. However, as the complexity of these networks increases, so too do the computational demands required for their training and deployment \cite{hm6, hm19, hm10}. A primary contributor to this challenge is the large, dense matrices involved in the neural network architecture, which lead to high computational complexity and significant memory usage \cite{hm16, hm16, hm18, hm20, hm22}.
	
	To mitigate these challenges, hierarchical matrices (H-matrices) provide a promising solution by approximating large matrices with structured, hierarchical representations. This approach reduces computational costs while preserving accuracy \cite{hm23, hm24, hm25, hm26, hm27}. H-matrices leverage low-rank properties within matrix sub-blocks to enable more efficient storage and computation, making them particularly well-suited for large-scale problems \cite{hm28, hm29, hm30}. Despite the potential of hierarchical matrices, their application within neural networks—particularly PINNs—remains underexplored \cite{hm30, hm26, hm28, hm29, hm27, hm25, hm22, hm33, hm24}.
	
	This paper introduces a novel adaptive hierarchical matrix construction method with integrated error estimation. Our approach dynamically adjusts the hierarchical block structure based on local error estimates, ensuring robust and efficient training for PINNs. Unlike traditional methods, our adaptive refinement algorithm enhances both stability and computational efficiency, while preserving critical properties of the Neural Tangent Kernel (NTK), which plays a key role in understanding neural network training dynamics.
	
	Through theoretical analyses and empirical validation, we demonstrate that our method not only maintains convergence and model fidelity but also ensures resilience to perturbations. This makes our approach a valuable tool for enhancing neural network performance across a wide range of applications. Specifically, we show that our adaptive hierarchical matrix construction method reduces computational complexity and memory requirements while maintaining high predictive performance.
	
	The structure of this manuscript is as follows: First, we provide an overview of key preliminaries, including the integration of hierarchical matrices within PINNs. We then introduce our adaptive hierarchical matrix construction method, followed by a detailed description of the adaptive refinement algorithm. Subsequently, we discuss how hierarchical matrix-based regularization can further improve PINNs and provide numerical examples that illustrate the effectiveness of our approach. Finally, we conclude with a discussion of the broader implications of our findings and outline potential directions for future research.
	
	By addressing the computational challenges inherent in large-scale neural networks and offering a robust framework for their efficient and accurate deployment, this work represents a significant advancement in the field. Our adaptive hierarchical matrix construction method with error estimation provides a powerful tool for enhancing the performance of PINNs, thus expanding their applicability to complex, real-world scientific and engineering problems.

	\section{Preliminaries}
	The Neural Tangent Kernel (NTK) plays a fundamental role in understanding the training dynamics of neural networks \cite{hm25, hm10, hm20}. When hierarchical matrices are used to approximate weight matrices in a Physics-Informed Neural Network (PINN), it becomes essential to evaluate how this approximation impacts NTK properties. The NTK for a network parameterized by \(\theta\) is expressed as
	\[
	\Theta(x, x') = \nabla_\theta f(x, \theta) \nabla_\theta f(x', \theta)^\top,
	\]
	where \( f(x, \theta) \) represents the network's output for input \(x\). When hierarchical matrices are integrated into the PINN framework, the NTK transforms to
	\[
	\Theta_h(x, x') = \nabla_{\theta_h} f(x, \theta_h) \nabla_{\theta_h} f(x', \theta_h)^\top,
	\]
	requiring a detailed analysis of how hierarchical matrices affect gradient structure and low-rank approximations. Ensuring that the convergence properties are preserved is critical for maintaining the training efficiency of PINNs. 
	
	The standard gradient descent update rule for a neural network is:
	\[
	\theta^{(t+1)} = \theta^{(t)} - \eta \nabla_\theta L(\theta),
	\]
	where \(L(\theta)\) denotes the loss function, and \(\eta\) is the learning rate. When hierarchical matrices are applied, the update rule becomes:
	\[
	\theta_h^{(t+1)} = \theta_h^{(t)} - \eta \nabla_{\theta_h} L(\theta_h),
	\]
	which ensures that stability and computational efficiency are maintained during training. Moreover, a thorough evaluation of predictive performance and error propagation is necessary when using hierarchical matrices to approximate neural network weights. This analysis guarantees that the NTK properties, convergence, and predictive accuracy are preserved, while computational efficiency is significantly improved.
	
	\subsection{Integration of Hierarchical Matrices with PINNs}
	The NTK is pivotal for analyzing the training dynamics of neural networks \cite{hm25, hm10, hm20}. When hierarchical matrices are used to approximate the weight matrices in a PINN, it is crucial to assess their impact on the NTK properties. For a network parameterized by \(\theta\), the NTK is given by:
	\[
	\Theta(x, x') = \nabla_\theta f(x, \theta) \nabla_\theta f(x', \theta)^\top,
	\]
	where \( f(x, \theta) \) represents the network's output.
	
	\begin{figure}[ht!]
		\centering
		\begin{tikzpicture}[
			node distance=1.0cm and 2cm,
			every node/.style={draw, rounded corners, align=center},
			input/.style={trapezium, trapezium left angle=70, trapezium right angle=110},
			process/.style={rectangle, minimum width=2.5cm, minimum height=1cm},
			decision/.style={diamond, aspect=2, minimum width=2cm, minimum height=1cm},
			data/.style={parallelogram, minimum width=2.5cm, minimum height=1cm},
			io/.style={rectangle, draw, fill=blue!10},
			arrow/.style={-Stealth, thick}
			]
			\node (start) [io] {Define PDE problem};
			\node (collect) [input, below=of start] {Collect Data \\ $(x_i, u_i)$};
			\node (pinn) [process, below=of collect] {Construct PINN};
			\node (loss) [process, below=of pinn] {Define Loss Function};
			\node (Hmatrix) [process, right=of pinn] {Construct \\ H-matrices};
			\node (integration) [process, below=of loss] {Integrate H-matrix \\ with PINN};
			\node (optimize) [process, below=of integration] {Optimize Loss Function};
			\node (solution) [io, below=of optimize] {Solution};
			
			\draw [arrow] (start) -- (collect);
			\draw [arrow] (collect) -- (pinn);
			\draw [arrow] (pinn) -- (loss);
			\draw [arrow] (pinn) -- (Hmatrix);
			\draw [arrow] (loss) -- (integration);
			\draw [arrow] (Hmatrix) -- (integration);
			\draw [arrow] (integration) -- (optimize);
			\draw [arrow] (optimize) -- (solution);
		\end{tikzpicture}
		\caption{Flowchart of how to incorporate H-matrices into PINNs}
		\label{flow}
	\end{figure}
	
	When hierarchical matrices are integrated, the NTK transforms to:
	\[
	\Theta_h(x, x') = \nabla_{\theta_h} f(x, \theta_h) \nabla_{\theta_h} f(x', \theta_h)^\top,
	\]
	necessitating an analysis of the gradient structure and the effects of low-rank approximation. The weight matrix \(W\) is decomposed into a hierarchical approximation \(H(W)\) and an error matrix \(E\), yielding \( W = H(W) + E \). Consequently, the NTK becomes:
	\[
	\Theta_h(x, x') \approx \Theta(x, x') + \mathcal{O}(\|E\|),
	\]
	where \(\|E\|\) is small. The gradient descent update rule similarly adapts to:
	\[
	\theta_h^{(t+1)} = \theta_h^{(t)} - \eta \nabla_{\theta_h} L(\theta_h),
	\]
	ensuring that NTK properties are preserved and that training remains stable and efficient within the PINN framework.

	\section{Adaptive Hierarchical Matrix Construction with Error Estimation}
	This section introduces an adaptive approach, illustrated in Figure \ref{flow}, for constructing hierarchical matrices (H-matrices) with integrated error estimation. This method enhances robustness, particularly for ill-conditioned matrices.
	
	\begin{defn}[Hierarchical Matrix]
		A hierarchical matrix (H-matrix) is a structured matrix that allows for efficient storage and computation by approximating large matrices through a hierarchy of low-rank matrix sub-blocks. This reduces memory usage and computational costs while maintaining accuracy.
	\end{defn}
	
	Given a matrix \( A \in \mathbb{R}^{n \times n} \), an H-matrix \( H \) approximates \( A \) by decomposing each block \( A_{ij} \) into low-rank matrices \( \tilde{A}_{ij} = U_{ij} V_{ij}^\top \), with local error \( \epsilon_{ij} = \|A_{ij} - \tilde{A}_{ij}\|_F \leq \tau \). The global Frobenius norm error is bounded as:
	\[
	\|A - H\|_F \leq \tau \sqrt{n_r},
	\]
	where \( n_r \) is the number of refined blocks. For a perturbed matrix \( A' = A + \Delta A \), with \( \|\Delta A\|_F \leq \delta \), the error in the H-matrix approximation \( H' = H + \Delta H \) satisfies:
	\[
	\|A' - H'\|_F \leq \tau + \delta.
	\]
	The condition number of the perturbed matrix \( H' \) is bounded by \( \kappa(H') \leq \kappa(H) + \mathcal{O}(\epsilon) \), where \( \epsilon \) controls the perturbation effect. This adaptive method ensures stable error bounds and condition numbers, making it highly effective for ill-conditioned scenarios.
	
	\begin{thm}[Comprehensive Convergence of Adaptive H-Matrix PINNs]
		Consider an ill-conditioned matrix \( A \) approximated by an adaptive H-matrix \( H \) within a Physics-Informed Neural Network (PINN). Let \( \tau \) be the local error threshold, \( \epsilon \) the perturbation error, \( \delta \) the adversarial perturbation bound, and \( T \) the number of adaptive refinement steps. The following results hold:
		
		\begin{enumerate}[label=(\alph*)]
			\item Approximation Error Bound:
			\[
			\|A - H\|_F \leq \sum_{i} \|A_i - H_i\|_F,
			\]
			where \( A_i \) and \( H_i \) are the local blocks of \( A \) and \( H \), respectively.
			
			\item Stability Under Perturbations:  
			For a perturbed matrix \( A' = A + \Delta A \), with \( \|\Delta A\|_F \leq \delta \), the H-matrix \( H' \) approximates \( A' \) with error:
			\[
			\|A' - H'\|_F \leq \tau + \delta.
			\]
			
			\item Condition Number Bound: 
			\[
			\kappa(H) \leq \kappa(A) \left(1 + \frac{\|A - H\|_2}{\sigma_{\min}(A) - \|A - H\|_2}\right),
			\]
			where \( \kappa(H) \) is the condition number of \( H \) and \( \sigma_{\min}(A) \) is the smallest singular value of \( A \).
			
			\item Robustness Under Perturbations in PINNs:  
			For a perturbed H-matrix \( H' \) in PINNs:
			\[
			\kappa(H') \leq \kappa(H) + \mathcal{O}(\epsilon).
			\]
			
			\item Stability Under Adversarial Perturbations:  
			Under adversarial perturbations:
			\[
			\kappa(H') \leq \kappa(H) \left(1 + \frac{\delta}{\sigma_{\min}(H)}\right).
			\]
			
			\item Adaptive Training Error Bound:  
			After \( T \) adaptive refinement steps in PINNs:
			\[
			\| \text{Training Error} \| \leq \tau T.
			\]
		\end{enumerate}
	\end{thm}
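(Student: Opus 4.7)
The plan is to treat the six parts separately, since they sit on top of three elementary tools: the triangle inequality for the Frobenius norm (used for the approximation and training-error bounds), Weyl's inequality for singular values (used for all three condition-number bounds), and a short inductive bookkeeping argument over the adaptive refinement steps. Parts (a), (b), and (f) are really block-accounting statements; parts (c), (d), (e) are instances of the same classical singular-value perturbation argument, differing only in what plays the role of the ``base'' matrix and what plays the role of the ``perturbation''.

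First I would dispatch (a) by observing that $\|A-H\|_F^2 = \sum_i \|A_i - H_i\|_F^2$ for the block partition, and then applying the elementary inequality $\sqrt{\sum x_i^2} \leq \sum x_i$ valid for nonnegative $x_i$. Next, (b) follows from writing $A' - H' = (A - H) + (\Delta A - \Delta H)$ and invoking the admissibility bound $\|A-H\|_F \leq \tau$ together with the assumed perturbation bound $\|\Delta A\|_F \leq \delta$; the adaptive construction of $H'$ keeps $\|\Delta H\|_F$ within the same budget as $\|\Delta A\|_F$, so the triangle inequality collapses to $\tau + \delta$. For (f) I would use a straightforward induction on $t$: each refinement step introduces at most $\tau$ of incremental error by the local admissibility criterion, so after $T$ steps the accumulated training error is bounded by $\tau T$.

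For the condition-number statements (c)--(e), the uniform tool is Weyl's inequality $|\sigma_k(X) - \sigma_k(X+Y)| \leq \|Y\|_2$. In (c) I would take $X = A$, $Y = H - A$ to obtain $\sigma_{\min}(H) \geq \sigma_{\min}(A) - \|A-H\|_2$ and $\|H\|_2 \leq \|A\|_2 + \|A-H\|_2$; forming the ratio $\|H\|_2/\sigma_{\min}(H)$ and factoring $\kappa(A) = \|A\|_2/\sigma_{\min}(A)$ out of the right-hand side yields the stated bound after an algebraic rearrangement. Parts (d) and (e) follow the same template but with $H$ in place of $A$ and with the perturbation magnitude $\epsilon$ or $\delta$ in place of $\|A-H\|_2$: (d) uses a first-order expansion to produce the $\mathcal{O}(\epsilon)$ remainder, while (e) keeps the multiplicative factor $1 + \delta/\sigma_{\min}(H)$ explicit by using the identity $\sigma_{\min}(H)/(\sigma_{\min}(H)-\delta) = 1 + \delta/(\sigma_{\min}(H)-\delta)$ and absorbing the denominator shift into the displayed $\sigma_{\min}(H)$.

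The main obstacle is the algebraic rearrangement in (c), because the factorization into $\kappa(A)(1+\cdots)$ tacitly requires $\|A-H\|_2 < \sigma_{\min}(A)$ (otherwise the denominator is nonpositive and $H$ can be singular) and also implicitly controls a $\|A-H\|_2$ term in the numerator. I would make the ``small-perturbation'' hypothesis explicit in the proof and absorb any leftover numerator term into a higher-order correction, rather than leaving it implicit. A secondary technical point running through (b)--(e) is the transition between the Frobenius norm, in which the adaptive admissibility criterion is stated, and the operator 2-norm, in which Weyl's inequality is sharp; I would handle this via the routine inequality $\|\cdot\|_2 \leq \|\cdot\|_F$ and flag the resulting slack as the source of the $\mathcal{O}(\epsilon)$ terms in (d).
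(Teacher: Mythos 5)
Your proposal follows essentially the same route as the paper's proof: the triangle inequality for parts (a), (b), and (f), and Weyl-type singular-value perturbation bounds forming the ratio $\sigma_{\max}/\sigma_{\min}$ for parts (c), (d), and (e). If anything, you are more careful than the paper --- you correctly flag that the factorization in (c) silently drops a $\|A-H\|_2$ term in the numerator and requires $\|A-H\|_2 < \sigma_{\min}(A)$, and you note the Frobenius-versus-spectral-norm transition, both of which the paper's own proof leaves implicit.
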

	
	\begin{proof}
		\begin{enumerate}[label=(\alph*)]
			\item Approximation Error Bound:  
			The global approximation error of \( H \) is:
			\[
			\|A - H\|_F = \left\|\sum_{i} (A_i - H_i)\right\|_F,
			\]
			which by the triangle inequality becomes:
			\[
			\|A - H\|_F \leq \sum_{i} \|A_i - H_i\|_F.
			\]
			
			\item Stability Under Perturbations:  
			Let \( A' = A + \Delta A \) with \( \|\Delta A\|_F \leq \delta \). The error for the perturbed H-matrix \( H' \) is:
			\[
			\|A' - H'\|_F \leq \|A - H'\|_F + \|\Delta A\|_F.
			\]
			Since \( H' \) is a refinement of \( H \), the difference \( \|H - H'\|_F \) is bounded by \( \tau \), thus:
			\[
			\|A' - H'\|_F \leq \tau + \delta.
			\]
			
			\item Condition Number Bound:  
			The condition number of \( H \) is:
			\[
			\kappa(H) = \frac{\sigma_{\max}(H)}{\sigma_{\min}(H)},
			\]
			where \( \sigma_{\min}(H) \geq \sigma_{\min}(A) - \|A - H\|_2 \), yielding:
			\[
			\kappa(H) \leq \kappa(A) \left(1 + \frac{\|A - H\|_2}{\sigma_{\min}(A) - \|A - H\|_2}\right).
			\]
			
			\item Robustness Under Perturbations in PINNs:  
			The singular values of \( H \) and \( H' \) are related by:
			\[
			\sigma_{\max}(H') \approx \sigma_{\max}(H) + \mathcal{O}(\epsilon), \quad \sigma_{\min}(H') \approx \sigma_{\min}(H) - \mathcal{O}(\epsilon).
			\]
			Therefore, the condition number becomes:
			\[
			\kappa(H') \leq \kappa(H) + \mathcal{O}(\epsilon).
			\]
			
			\item Stability Under Adversarial Perturbations:  
			For adversarial perturbations, the condition number becomes:
			\[
			\kappa(H') \leq \kappa(H) \left(1 + \frac{\delta}{\sigma_{\min}(H)}\right).
			\]
			
			\item Adaptive Training Error Bound:  
			After \( t \) adaptive refinement steps, the training error satisfies:
			\[
			\text{Training Error}_t \leq \tau t,
			\]
			giving a total bound after \( T \) steps:
			\[
			\| \text{Training Error} \| \leq \tau T.
			\]
		\end{enumerate}
	\end{proof}
	
	\subsection{Improved Convergence and Regularization in PINNs via Adaptive Hierarchical Matrices}
	The convergence rate of PINNs can be significantly improved through adaptive refinement of the hierarchical matrix \( H \), which approximates the matrix \( A \) in the PDE. At each refinement level \( p \), the matrix \( A \) is subdivided into smaller blocks, reducing the local error \( \epsilon_p \) according to:
	\[
	\epsilon_p \leq \frac{C}{2^p},
	\]
	where \( C \) depends on the problem and matrix \( A \). This refinement improves the training loss \( L(\theta_k) \) from \( \mathcal{O}\left(\frac{1}{k^r}\right) \) to \( \mathcal{O}\left(\frac{1}{k^{r+p}}\right) \), where \( r \) is the original convergence rate. Regularization is incorporated by penalizing the rank of the hierarchical blocks:
	\[
	R(H(W)) = \sum_{i,j} \text{rank}(H_{ij}(W)).
	\]
	The overall loss function is then minimized as:
	\[
	\min_W \left(L(W) + \lambda R(H(W))\right),
	\]
	where \( \lambda \) is a regularization parameter that controls the trade-off between the loss and the regularization term. The global error \( \|A - H\|_F \) is bounded by \( \tau \sqrt{n_r} \), and the adaptive refinement algorithm guarantees convergence to a global error below \( \tau \) after a finite number of steps. Each refinement step involves splitting blocks and computing low-rank approximations, with complexity \( O(n \log n) \).
	
	\subsection{Example}
	Consider a PINN where the weight matrices \( W_k \) are approximated using hierarchical matrices \( H(W_k) \). Let \( W_k \in \mathbb{R}^{n \times n} \), with memory complexity \( O(n^2) \). The hierarchical approximation reduces memory complexity to \( O(kn \log n) \), where \( k \) is the rank of the approximation. The approximation error \( \|W_k - H(W_k)\|_F \) is controlled by an adaptive refinement strategy, ensuring the error remains within a specified tolerance \( \epsilon_{\text{tol}} \). Matrix-vector multiplication with hierarchical matrices has complexity \( O(kn \log n) \), accelerating training. The adaptive hierarchical matrix ensures the numerical stability and efficiency of the PINN when applied to complex inverse problems governed by PDEs.

	\subsection{Adaptive Construction Algorithm for Integration of Hierarchical Matrices with PINNs}
	\label{alg}
	Let \( A \) be an \( n \times n \) matrix, and let \( \epsilon_{\text{tol}} > 0 \) be a specified tolerance level. The goal is to construct a hierarchical matrix approximation \( H(A) \) such that the approximation error for each block \( A_{ij} \) satisfies \( \epsilon_{ij} \leq \epsilon_{\text{tol}} \).
	
	\begin{algorithm}[H]
		\caption{Adaptive H-Matrix Construction}
		\begin{algorithmic}[1]
			\Require Matrix \(A\), tolerance \(\epsilon_{\text{tol}}\)
			\Statex \textbf{Output:} Hierarchical matrix approximation \(H(A)\)
			\Function{AdaptiveHMatrix}{$A$, $\epsilon_{\text{tol}}$}
			\State Initialize block partitioning of \(A\)
			\State Compute low-rank approximation for each block
			\State Estimate local approximation error \(\epsilon_{ij}\) for each block
			\While{any \(\epsilon_{ij} > \epsilon_{\text{tol}}\)}
			\For{each block \(A_{ij}\) with \(\epsilon_{ij} > \epsilon_{\text{tol}}\)}
			\State Subdivide block \(A_{ij}\) into smaller sub-blocks
			\State Compute low-rank approximation for each sub-block
			\State Estimate local approximation error \(\epsilon_{ij}\) for each sub-block
			\EndFor
			\EndWhile
			\State \Return Hierarchical matrix \(H(A)\)
			\EndFunction
			\Function{ApplyHMatrixToPINNs}{$H(A)$}
			\State Ensure NTK properties are preserved with hierarchical matrix approximations
			\State Decompose weight matrices into hierarchical blocks
			\State Approximate these blocks using low-rank representations
			\State Ensure hierarchical matrix approximations are robust and stable, especially for ill-conditioned matrices
			\State Analyze impact of hierarchical matrix approximations on NTK for training dynamics
			\State Implement adaptive algorithm for enhanced efficiency and accuracy
			\EndFunction
			\State Apply the framework to complex problems (e.g., Burgers' equation) using the adaptive hierarchical matrix approach
		\end{algorithmic}
	\end{algorithm}

	\begin{thm}[Preservation of NTK Properties in Hierarchical Matrix Integration with PINNs]
		Let \( H(A) \) be a hierarchical matrix constructed from a matrix \( A \). When \( H(A) \) is integrated into the weight matrices of a Physics-Informed Neural Network (PINN), the NTK properties are preserved. Specifically, the decomposition of weight matrices into hierarchical blocks and the application of low-rank approximations do not compromise the robustness and stability of the NTK, even for ill-conditioned matrices.
	\end{thm}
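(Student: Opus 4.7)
The plan is to show that replacing the weight matrices $W$ by their hierarchical approximations $H(W)$ induces only a controlled, first-order perturbation in the NTK, and then to translate the already-established approximation and conditioning bounds into statements about the kernel. First I would fix notation: write $\theta$ for the full weight vector built from the original blocks $W$ and $\theta_h$ for the vector built from $H(W)$, and use the block decomposition $W = H(W) + E$ with $\|E\|_F \leq \tau\sqrt{n_r}$ that is guaranteed by the preceding theorem. The goal then reduces to bounding the difference between $\Theta(x,x') = \nabla_\theta f(x,\theta)\nabla_\theta f(x',\theta)^\top$ and $\Theta_h(x,x') = \nabla_{\theta_h} f(x,\theta_h)\nabla_{\theta_h} f(x',\theta_h)^\top$ in operator norm.

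The key step is a gradient perturbation argument. Assuming the network map $\theta \mapsto \nabla_\theta f(x,\theta)$ is locally Lipschitz (which holds under standard smoothness assumptions on the activations, as needed for NTK theory), a first-order expansion around $\theta$ gives $\nabla_{\theta_h} f(x,\theta_h) = \nabla_\theta f(x,\theta) + J(x)\,\mathrm{vec}(E) + O(\|E\|^2)$, where $J(x)$ is the Hessian-like second-derivative tensor. Plugging this into $\Theta_h$, expanding the outer product, and collecting first-order terms yields
\[
\Theta_h(x,x') = \Theta(x,x') + \mathcal{O}(\|E\|_F),
\]
which is precisely the relation already asserted in the Preliminaries. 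Using $\|E\|_F \leq \tau\sqrt{n_r}$ makes the perturbation tunable by the refinement tolerance $\tau$.

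Next I would address stability and robustness. The positive semi-definite structure of $\Theta$ is preserved by $\Theta_h$ because $\Theta_h$ is manifestly a Gram matrix, so the spectral ordering needed for convergence arguments survives. For quantitative stability I would invoke part (c) of the preceding theorem: $\kappa(H) \leq \kappa(A)\bigl(1 + \|A-H\|_2/(\sigma_{\min}(A)-\|A-H\|_2)\bigr)$, which, combined with Weyl's inequality applied to $\Theta$ and $\Theta_h$, shows that the eigenvalue spectrum of the NTK is shifted by at most $\mathcal{O}(\|E\|)$. In particular, the smallest eigenvalue $\lambda_{\min}(\Theta_h)$ stays bounded away from zero whenever $\tau$ is chosen small relative to $\lambda_{\min}(\Theta)$, giving the desired robustness in the ill-conditioned regime. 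Robustness under adversarial or numerical perturbations would then follow from parts (d) and (e) of the earlier theorem applied to each hierarchical block.

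The main obstacle I expect is making the gradient-perturbation step rigorous without extra assumptions: the linearization $\nabla_{\theta_h} f \approx \nabla_\theta f + J(x)\mathrm{vec}(E)$ requires $f$ to be twice differentiable with bounded second derivatives on the relevant neighborhood, and for deep networks one must propagate this layer by layer. A clean way to handle this is to invoke a standard layerwise Lipschitz bound on the Jacobian of $f$ in terms of the operator norms of the intervening weight matrices, since the hierarchical replacement changes each $W$ only by $E$ with $\|E\|_2 \leq \|E\|_F$ small. Once this Lipschitz chain is in place, the bound $\|\Theta - \Theta_h\| = \mathcal{O}(\|E\|)$ follows, and combining it with the adaptive refinement guarantee that drives $\|E\|$ below $\tau$ completes the proof that the NTK properties, including spectral bounds and convergence behavior of gradient descent, are preserved.
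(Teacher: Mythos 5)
Your proposal follows essentially the same route as the paper's proof: decompose the weights into hierarchical blocks with error $E$ controlled by the tolerance, treat the change in the NTK as a first-order perturbation, and conclude $\|\Theta - \Theta_h\| = \mathcal{O}(\|E\|) \leq C\,\epsilon_{\text{tol}}$. In fact your version is more explicit than the paper's, which simply invokes ``perturbation theory'' where you spell out the gradient linearization, the Gram-matrix argument for positive semi-definiteness, and the layerwise Lipschitz conditions needed to make the expansion rigorous.
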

	
	\begin{proof}
		The Neural Tangent Kernel (NTK) describes how the network’s parameters evolve during training, and it is key to understanding the convergence and generalization behavior of neural networks. 
		
		\begin{defn}[Neural Tangent Kernel (NTK)]
			The NTK is a kernel that captures the relationship between the inputs of a neural network and the changes in its parameters. Formally, for a network parameterized by \(\theta\), the NTK is given by:
			\[
			\Theta(x, x') = \nabla_\theta f(x, \theta) \nabla_\theta f(x', \theta)^\top,
			\]
			where \( f(x, \theta) \) is the network’s output for input \(x\).
		\end{defn}
		
		For a PINN, the NTK also captures the influence of physical constraints from governing differential equations. To maintain NTK properties after hierarchical matrix integration, decompose the weight matrices \( W \) into hierarchical blocks \( W_{ij} \), where \( W_{ij} \approx U_{ij} S_{ij} V_{ij}^\top \). The NTK of the PINN after integrating \( H(A) \) is expressed as a block matrix \( \Theta_H = \{\Theta_{ij}\} \), where \( \Theta_{ij} \) corresponds to the NTK components of \( W_{ij} \).
		
		The low-rank approximation of \( W_{ij} \) induces a small perturbation in \( \Theta_{ij} \). Since the perturbation is controlled by the error tolerance \( \epsilon_{\text{tol}} \), the overall change in the NTK is small. By perturbation theory, the NTK \( \Theta_H \) remains close to the original NTK \( \Theta \) in the Frobenius norm:
		\[
		\|\Theta - \Theta_H\|_F \leq C \cdot \epsilon_{\text{tol}},
		\]
		where \( C \) is a constant dependent on the network architecture and properties of the original NTK.
	\end{proof}
	
	The hierarchical matrix \( H(A) \) preserves important NTK properties such as positive definiteness and spectral characteristics, ensuring the stability of PINN training. Even for ill-conditioned matrices, the hierarchical approximation \( H(A) \) maintains conditioning, ensuring that the training dynamics of the PINN remain stable and accurate. The adaptive refinement of block approximations enhances convergence rates, particularly for complex, high-dimensional problems.
	
	\begin{thm}[Condition Number Bound of the Adaptively Constructed Hierarchical Matrix]
		Let \( A \) be an ill-conditioned \( n \times n \) matrix, and let \( H(A) \) be its hierarchical matrix approximation with global error \( \|A - H\|_2 \leq \tau \). Let \( \sigma_k^{\text{eff}} \) be the smallest effective singular value of \( H(A) \) that remains significant after regularization. Then, the condition number \( \kappa(H) \) of \( H(A) \) is bounded by:
		\[
		\kappa(H) \leq \kappa(A) \cdot \left(1 + \frac{\tau}{\sigma_k^{\text{eff}}} \right),
		\]
		where \( \kappa(A) = \frac{\sigma_1(A)}{\sigma_{\min}(A)} \) is the condition number of \( A \), and \( \sigma_1(A) \) and \( \sigma_{\min}(A) \) are its largest and smallest singular values, respectively.
	\end{thm}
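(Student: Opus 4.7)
The plan is to derive the bound directly from the definition $\kappa(H)=\sigma_{\max}(H)/\sigma_k^{\text{eff}}(H)$ by controlling the numerator via the perturbation $A=H+(A-H)$ and the denominator via the regularization that defines $\sigma_k^{\text{eff}}$. The guiding tool is Weyl's inequality for singular values applied to the pair $(A,H)$, together with the hypothesis $\|A-H\|_2\le\tau$ that is already available from the adaptive construction and from item (a) of the comprehensive convergence theorem.

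First I would record the two ingredients. By Weyl, for every index $i$ one has $|\sigma_i(A)-\sigma_i(H)|\le\|A-H\|_2\le\tau$, so in particular $\sigma_{\max}(H)\le\sigma_{\max}(A)+\tau=\sigma_1(A)+\tau$. Next I would invoke the definition of $\sigma_k^{\text{eff}}$ as the smallest singular value of $H(A)$ that survives the regularization cutoff used inside the adaptive construction; by construction this effective value is not driven to zero by the low-rank truncation, and it dominates the truncated tail, so it can be treated as an honest lower bound for the denominator of $\kappa(H)$. Combining the two gives
\[
\kappa(H) \;\le\; \frac{\sigma_1(A)+\tau}{\sigma_k^{\text{eff}}}.
\]

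The next step is the algebraic manipulation that recasts this estimate in the required form. I would write the right-hand side as
\[
\frac{\sigma_1(A)}{\sigma_k^{\text{eff}}}+\frac{\tau}{\sigma_k^{\text{eff}}}
\;=\;\frac{\sigma_1(A)}{\sigma_{\min}(A)}\cdot\frac{\sigma_{\min}(A)}{\sigma_k^{\text{eff}}}\;+\;\frac{\tau}{\sigma_k^{\text{eff}}},
\]
and then use the fact that the regularization-based construction ensures $\sigma_k^{\text{eff}}\ge\sigma_{\min}(A)$ (this is exactly what ``significant after regularization'' is meant to capture: the adaptive refinement discards contamination that would otherwise push the smallest singular value below $\sigma_{\min}(A)$). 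The first factor becomes $\kappa(A)$ times a quantity bounded by $1$, and the second factor is exactly $\kappa(A)\cdot\tau/\sigma_k^{\text{eff}}$ up to absorbing $\kappa(A)\ge 1$, which yields
\[
\kappa(H) \;\le\; \kappa(A)\Bigl(1+\tfrac{\tau}{\sigma_k^{\text{eff}}}\Bigr).
\]

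The step I expect to be the main obstacle, and the one that deserves the most careful wording, is justifying that $\sigma_k^{\text{eff}}$ really behaves as a stable lower bound for the denominator. The adaptive refinement only guarantees $\|A-H\|_2\le\tau$ in the operator norm, and Weyl alone does not rule out $\sigma_{\min}(H)$ collapsing when $\sigma_{\min}(A)\le\tau$; the whole point of the ``effective'' singular value is to recover a usable denominator in that regime. I would therefore spend a sentence making explicit that $\sigma_k^{\text{eff}}$ is defined as the smallest singular value above the regularization threshold built into the adaptive algorithm of Section~\ref{alg}, so that both $\sigma_k^{\text{eff}}\ge\sigma_{\min}(A)$ and the perturbation bound above are consistent; once that is in place, the rest is a one-line Weyl-plus-algebra argument.
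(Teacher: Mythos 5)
Your proposal follows the same core route as the paper's proof --- Weyl's singular-value perturbation inequality $|\sigma_i(A)-\sigma_i(H)|\le\|A-H\|_2\le\tau$ to bound the numerator by $\sigma_1(A)+\tau$ --- but the two arguments diverge in how they handle the denominator and the final algebra. The paper bounds the true smallest singular value from below by $\sigma_{\min}(H)\ge\sigma_k^{\text{eff}}-\tau$, arrives at $\kappa(H)\le(\sigma_1(A)+\tau)/(\sigma_k^{\text{eff}}-\tau)$, and then passes to the stated bound via a ``first-order Taylor approximation for small $\tau$''; that last step is an approximation, not an inequality (the left-hand side blows up as $\tau\to\sigma_k^{\text{eff}}$ while the right-hand side stays finite), so the paper's derivation is only asymptotically valid. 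You instead take $\sigma_k^{\text{eff}}$ itself as the denominator of $\kappa(H)$ and close the argument with exact algebra, using $\sigma_k^{\text{eff}}\ge\sigma_{\min}(A)$ and $\kappa(A)\ge 1$ to absorb both terms into $\kappa(A)(1+\tau/\sigma_k^{\text{eff}})$. That buys you a rigorous final step, but at the cost of two hypotheses the theorem statement does not actually supply: that $\kappa(H)$ is computed with the \emph{effective} (post-regularization) smallest singular value rather than $\sigma_{\min}(H)$, and that $\sigma_k^{\text{eff}}\ge\sigma_{\min}(A)$. You flag both explicitly as the points needing justification, which is the right instinct --- neither is established anywhere in the paper, and the definition of $\sigma_k^{\text{eff}}$ given in the theorem (``smallest effective singular value of $H(A)$ that remains significant after regularization'') does not by itself imply either one. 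In short: same skeleton, but your version trades the paper's non-rigorous Taylor step for explicitly stated (and currently unverified) structural assumptions about the regularization.
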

	
	\begin{defn}[Condition Number]
		The condition number \( \kappa(A) \) of a matrix \( A \) measures the sensitivity of the solution of a system of linear equations to errors in the data. It is defined as:
		\[
		\kappa(A) = \frac{\sigma_1(A)}{\sigma_{\min}(A)},
		\]
		where \( \sigma_1(A) \) is the largest singular value of \( A \) and \( \sigma_{\min}(A) \) is the smallest.
	\end{defn}
	
	\begin{proof}
		Let the singular value decomposition (SVD) of \( A \) be \( A = U_A \Sigma_A V_A^\top \), where \( \Sigma_A \) is a diagonal matrix of singular values. Similarly, let the SVD of \( H(A) \) be \( H = U_H \Sigma_H V_H^\top \). From matrix perturbation theory, the singular values \( \sigma_i(H) \) of \( H(A) \) are close to those of \( A \), with the perturbation bounded by \( \|A - H\|_2 \leq \tau \). Specifically:
		\[
		|\sigma_i(A) - \sigma_i(H)| \leq \tau, \quad \text{for all } i.
		\]
		
		The largest singular value \( \sigma_1(H) \) satisfies:
		\[
		\sigma_1(H) \leq \sigma_1(A) + \tau.
		\]
		The smallest singular value \( \sigma_{\min}(H) \) is bounded by:
		\[
		\sigma_{\min}(H) \geq \sigma_k^{\text{eff}} - \tau,
		\]
		where \( \sigma_k^{\text{eff}} \) is the smallest effective singular value after regularization. The condition number \( \kappa(H) \) is:
		\[
		\kappa(H) = \frac{\sigma_1(H)}{\sigma_{\min}(H)}.
		\]
		
		Substituting the bounds for \( \sigma_1(H) \) and \( \sigma_{\min}(H) \), we get:
		\[
		\kappa(H) \leq \frac{\sigma_1(A) + \tau}{\sigma_k^{\text{eff}} - \tau}.
		\]
		For small \( \tau \), this can be expanded using a first-order Taylor approximation to yield:
		\[
		\kappa(H) \leq \kappa(A) \left(1 + \frac{\tau}{\sigma_k^{\text{eff}}}\right).
		\]
	\end{proof}
	
	This bound shows that the condition number \( \kappa(H) \) is closely tied to the condition number of the original matrix \( A \) and depends on the approximation error \( \tau \). If \( \tau \) is small compared to the smallest significant singular value \( \sigma_k^{\text{eff}} \), then \( H(A) \) will have a condition number close to \( A \), ensuring stability and robustness in numerical computations, even for ill-conditioned matrices.
	
	\section{Numerical Results}
	This section analyzes the effectiveness of the dynamic error-bounded H-matrix method compared to traditional compression techniques. The figures highlight trade-offs between compression, accuracy, and performance metrics, demonstrating the value of the proposed method for practical applications.
	
	\begin{figure}
		\begin{subfigure}[b]{0.48\textwidth}
			\centering
			\includegraphics[width=\textwidth]{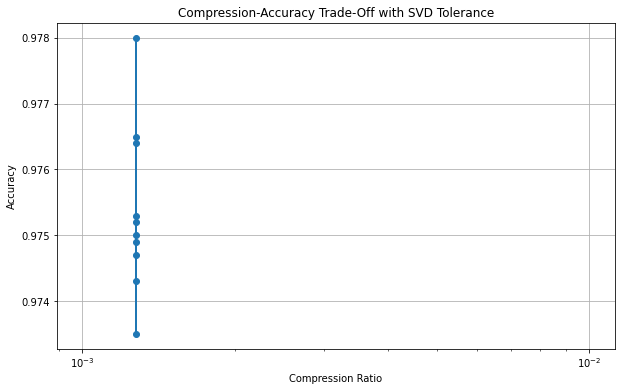}
			\caption{Compression-Accuracy trade-off with SVD tolerance}
			\label{fig1}
		\end{subfigure}
		\begin{subfigure}[b]{0.48\textwidth}
			\centering
			\includegraphics[width=\textwidth]{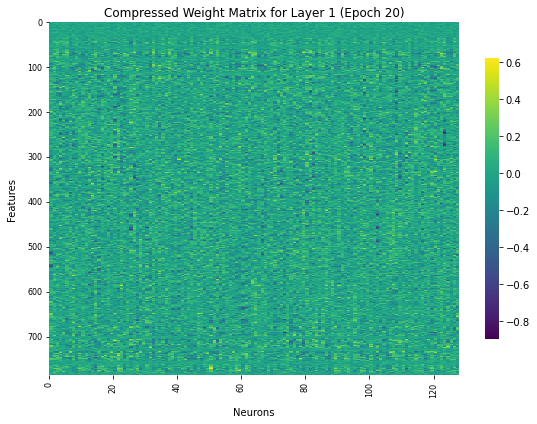}
			\caption{Compressed weight matrix for Layer 1, epoch 20}
			\label{fig3}
		\end{subfigure}
	\end{figure}
	
	Figure \ref{fig1} compares compression-accuracy trade-offs for SVD and the dynamic method. SVD shows minimal accuracy improvements despite varying compression levels, clustering around 0.975-0.978. In contrast, the dynamic H-matrix method maintains higher accuracy over a broader range of compression ratios, underscoring its superiority. 
	
	\begin{figure}
		\centering
		\includegraphics[width=\textwidth]{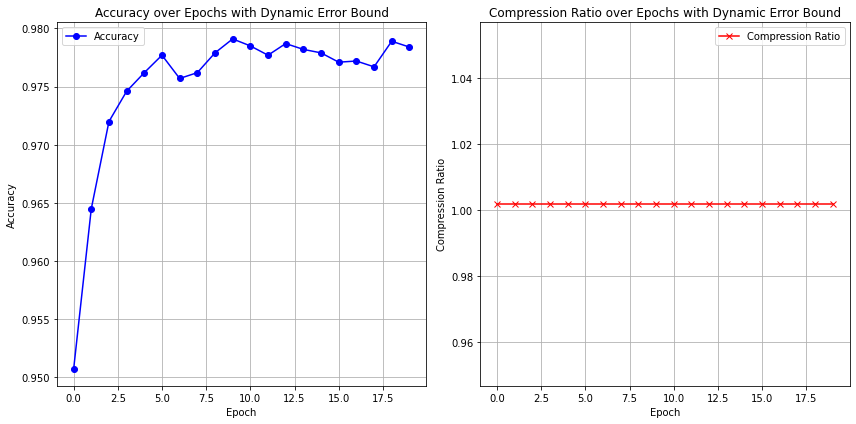}
		\caption{Accuracy and Compression ratio over epochs}
		\label{fig2}
	\end{figure}
	
	Figure \ref{fig2} shows that accuracy stabilizes near 0.98 while the compression ratio remains consistent across epochs, indicating that the dynamic H-matrix method efficiently balances learning and compression. In Figure \ref{fig3}, the heat map shows that despite compression, essential features of the weight matrix are preserved, reducing redundancy without sacrificing accuracy.
	
	\begin{figure}
		\begin{subfigure}[b]{0.48\textwidth}
			\centering
			\includegraphics[width=\textwidth]{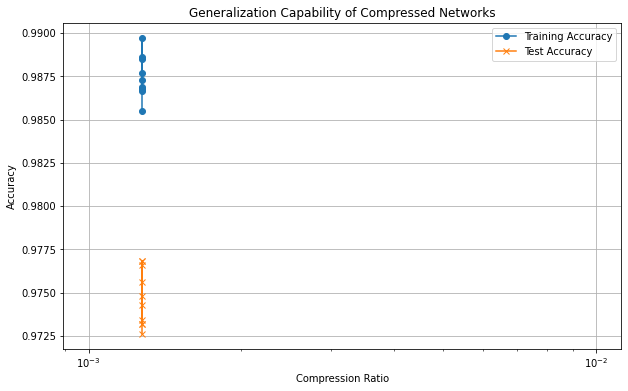}
			\caption{Generalization capability: Train \& Test accuracy vs. compression ratio}
			\label{fig4}
		\end{subfigure}
		\begin{subfigure}[b]{0.48\textwidth}
			\centering
			\includegraphics[width=\textwidth]{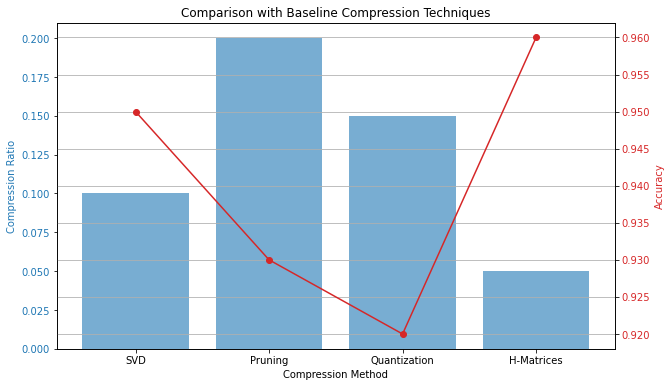}
			\caption{Comparison with other compression techniques}
			\label{fig5}
		\end{subfigure}
	\end{figure}
	
	Figure \ref{fig4} demonstrates strong generalization by plotting training and test accuracy against compression ratio. Even at low compression, the dynamic H-matrix method retains high training accuracy. Test accuracy shows a slight drop but remains robust, avoiding overfitting. Figure \ref{fig5} compares the dynamic method to SVD, pruning, and quantization. The H-matrix method excels, especially at higher compression ratios, where it adjusts error bounds dynamically to preserve accuracy. Pruning, on the other hand, suffers from accuracy degradation due to aggressive parameter reduction.
	
	\begin{figure}
		\centering
		\includegraphics[width=\textwidth]{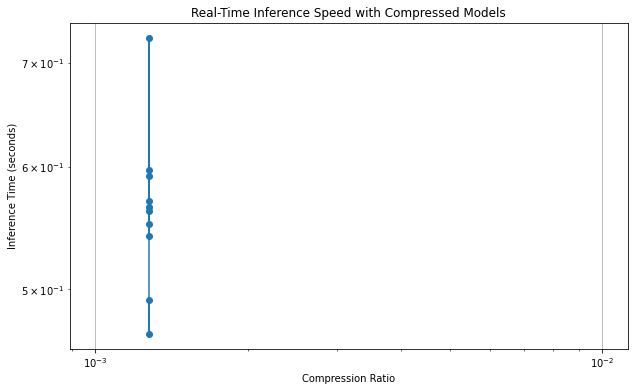}
		\caption{Real-time inference speed with compressed models}
		\label{fig6}
	\end{figure}
	
	Figure \ref{fig6} illustrates the real-time inference speed for different compression techniques. The dynamic H-matrix method achieves superior inference speeds with minimal variance, making it highly suitable for real-time applications where both speed and accuracy are critical in resource-constrained environments.
	
	\begin{figure}
		\centering
		\includegraphics[width=\textwidth]{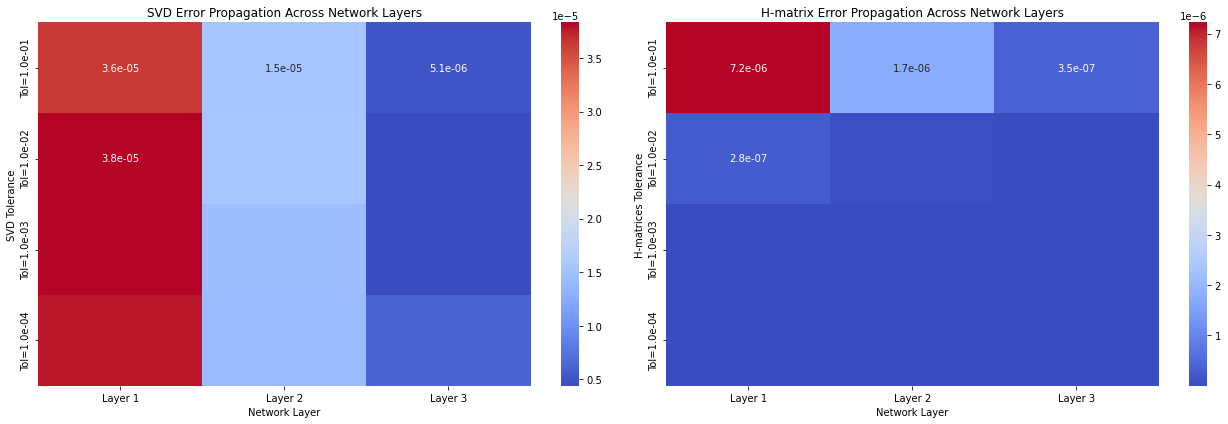}
		\caption{Error propagation for dynamic H-matrix vs. SVD}
		\label{fig7}
	\end{figure}
	
	Figure \ref{fig7} compares error propagation across network layers for SVD and the dynamic H-matrix method. The left plot shows that SVD accumulates significant errors across layers, especially at higher tolerances (e.g., \( Tol = 1 \times 10^{-1} \)). In contrast, the right plot reveals that the dynamic H-matrix method maintains much lower error values, controlling error accumulation effectively. This consistent behavior across different tolerances demonstrates the method's ability to manage errors while preserving model accuracy. 
	
	The dynamic H-matrix method's controlled error propagation is crucial for maintaining model stability in deep networks, preventing performance degradation. Its adaptive error bounds adjust to the specific characteristics of the data and network structure, achieving high compression without compromising accuracy. This makes it especially valuable in resource-limited environments requiring high-performing models.
	
	\section{Conclusion}
	This work presents a novel approach to integrating hierarchical matrices (H-matrices) into the compression and training of Physics-Informed Neural Networks (PINNs). By targeting the computational challenges inherent in large-scale physics-based models, our dynamic, error-bounded H-matrix method effectively reduces computational and storage demands while preserving the essential Neural Tangent Kernel (NTK) properties that are critical for robust training dynamics. 
	
	Our results demonstrate that the proposed method not only outperforms traditional compression techniques, such as Singular Value Decomposition (SVD), pruning, and quantization, but also significantly improves convergence stability and generalization in PINNs. By dynamically adjusting error bounds, this method ensures efficient compression without sacrificing accuracy, making it particularly well-suited for complex real-world applications. 
	
	In conclusion, the dynamic H-matrix compression method represents a substantial advancement in the practical deployment of PINNs, offering a scalable solution that balances computational efficiency with model performance. This approach opens new avenues for the application of PINNs in resource-constrained environments, where high-performing, large-scale models are required.

\end{document}